\documentclass[11pt]{article}

\usepackage[preprint]{acl}

\usepackage{times}
\usepackage{latexsym}

\usepackage[T1]{fontenc}

\usepackage[utf8]{inputenc}

\usepackage{microtype}
\usepackage{float}
\usepackage{inconsolata}

\usepackage{booktabs}
\usepackage{graphicx}
\usepackage{algorithm}
\usepackage{algorithmic}
\usepackage[table,xcdraw]{xcolor}
\usepackage{multirow}
\usepackage{amsmath}
\usepackage{amssymb}
\usepackage{mathtools}
\usepackage{amsthm}
\newtheorem{theorem}{Theorem}[section]
\title{Resource‑Efficient Reinforcement for Reasoning Large Language Models via Dynamic One‑Shot Policy Refinement}

\author{Yunjian Zhang\thanks{Equal contribution.} \\
  UCAS \\
  \And
  Sudong Wang\footnotemark[1] \\
  HKUST(GZ) \\
  \And
  Yang Li \\
  Tsinghua University \\
  \And
  Peiran Xu \\
  Sun Yat-Sen University \\
  \AND
  Conghao Zhou \\
  Xidian University \\
  \And
  Xiaoyue Ma \\
  George Mason University \\
  \And
  Jianing Li \\
  Peking University \\
  \And
  Yao Zhu \thanks{Corresponding author.} \\
  Zhejiang University}

\begin{document}
\maketitle
\begin{abstract}
Large language models (LLMs) have exhibited remarkable performance on complex reasoning tasks, with reinforcement learning under verifiable rewards (RLVR) emerging as a principled framework for aligning model behavior with reasoning chains. Despite its promise, RLVR remains prohibitively resource-intensive, requiring extensive reward signals and incurring substantial rollout costs during training. In this work, we revisit the fundamental question of data and compute efficiency in RLVR. We first establish a theoretical lower bound on the sample complexity required to unlock reasoning capabilities, and empirically validate that strong performance can be achieved with a surprisingly small number of training instances. To tackle the computational burden, we propose Dynamic One-Shot Policy Refinement (DoPR), an uncertainty-aware RL strategy that dynamically selects a single informative training sample per batch for policy updates, guided by reward volatility and exploration-driven acquisition. DoPR reduces rollout overhead by nearly an order of magnitude while preserving competitive reasoning accuracy, offering a scalable and resource-efficient solution for LLM post-training. This approach offers a practical path toward more efficient and accessible RL-based training for reasoning-intensive LLM applications.
\end{abstract}

\section{Introduction}
Recent advances in large language models (LLMs) \cite{Shao2024DeepSeekMathPT,DeepSeekAI2025DeepSeekR1IR,Liu2025UnderstandingRT,Achiam2023GPT4TR,Hu2025OpenReasonerZeroAO} have led to significant progress in solving complex reasoning tasks. A key driver behind these advances is the adoption of reinforcement learning (RL) \cite{Zhang2025RightQI,Lin2025CPPOAT,Xiong2025AMA}, which has proven effective in activating coherent reasoning trajectories within LLMs. Leading LLMs, including GPT‑o1 \cite{Contributors2024OpenAIOS}, Gemini \cite{Reid2024Gemini1U}, and DeepSeek‑r1 \cite{DeepSeekAI2025DeepSeekR1IR}, all incorporate RL-based post-training stages to refine the models' reasoning abilities beyond what standard supervised fine-tuning (SFT) \cite{Chen2024SelfPlayFC,Tajwar2024Preference,Dong2023HowAI} can achieve. These developments underscore the growing importance of RL as a tool for enhancing the reasoning performance of LLMs.

Reinforcement learning with verifiable rewards (RLVR) \cite{Gao2024OnDE,Lambert2024TLU3P,Team2025KimiKS,Wang2025ReinforcementLF,Wang2025BeyondT8} has emerged as a prominent paradigm for training reasoning language models, which provides binary and verifiable feedback indicating the correctness of the model's answer. Recent work has primarily focused on improving the performance of RL algorithms \cite{Chen2025SEEDGRPOSE,Yue2025VAPOEA,Xu2025NotAR,Cui2025TheEM}, while comparatively little attention has been paid to their resource demands. From a data perspective, RLVR requires large quantities of samples with high-quality reward signals to drive effective learning. On the computational side, RLVR typically performs multiple rollouts over the entire dataset during training to ensure stable and effective gradient-based policy optimization, leading to substantial computational overhead. Although some work aims at improving training efficiency \cite{Wang2025ReinforcementLF,zhao2025ufo}, they rely on extensive pre-training over large datasets to identify the most suitable training instances, thereby failing to fundamentally reduce resource consumption. 

This work aims to reduce the data and computational overhead of RLVR during the post-training of reasoning LLMs. We begin by deriving a theoretical lower bound on the sample complexity required to elicit optimal reasoning behavior during RLVR. Surprisingly, our empirical findings reveal that even with a remarkably small number of training instances, the model can achieve near-optimal reasoning performance. This suggests that RL supervision serves more as a capability activator rather than a performance booster, and that the reasoning ability of the model is largely constrained by its pre-training rather than the volume of RL data. These results challenge the prevailing assumption that large-scale rewards are necessary, and instead highlight the potential of minimalist, data-efficient reinforcement strategies. These insights enable us to drastically reduce the training dataset to a compact subset without compromising final performance. In parallel, we introduce Dynamic One‑Shot Policy Refinement (DoPR), a novel reinforcement learning strategy that dynamically selects a single most informative instance within each mini-batch. By focusing policy updates on maximally impactful samples, DoPR significantly reduces the number of rollouts, yielding substantial savings in computational cost while maintaining comparable performance.

Our contributions can be summarized as follows:
\begin{itemize}
    \item We present the first systematic investigation into the nexus between data scale and reasoning performance within LLMs. By establishing a rigorous theoretical lower bound on sample complexity, we demonstrate that reasoning capabilities under RLVR can be activated by a minimalist data regime. This finding fundamentally decouples optimal reasoning performance from large-scale data requirements, challenging the long-standing paradigm that massive supervision is indispensable for effective RL alignment.
    \item We propose DoPR, a lightweight yet effective strategy that leverages historical rewards to select a single high-value training instance per batch for policy updates. This approach substantially reduces rollout overhead by nearly an order of magnitude while preserving reasoning performance, enabling efficient and cost-effective RL training for reasoning tasks.
    \item Extensive experiments reveal that our approach is capable of reducing the training dataset to as few as 16 instances and constraining the rollout budget to a single sample per batch, while maintaining competitive reasoning performance.
\end{itemize}

\section{Related Work}
\subsection{Reasoning Large Language Models}
Human-like reasoning has received growing attention for its potential to support generalization across abstract, multi-step tasks \cite{KojimaGRMI22,ZhouSHWS0SCBLC23}. Early efforts primarily focus on prompting methods to elicit latent reasoning capabilities from pre-trained LLMs. For example, Chain-of-Thought (CoT) prompting \cite{Wei0SBIXCLZ22} enables step-by-step reasoning without any additional training, while more structured paradigms such as Tree-of-Thought (ToT) \cite{YaoYZS00N23} and Graph-of-Thought (GoT) \cite{Besta2023GraphOT} incorporate hierarchical or graph-based reasoning paths. In addition, self-consistency decoding \cite{WSLCNCZ23} enhances reliability by aggregating multiple sampled reasoning traces.Some efforts have investigated more effective training strategies to improve the model’s intrinsic reasoning competence. For instance, LIMO \cite{Ye2025LIMOLI} applies SFT on curated mathematical reasoning datasets to explicitly guide the model toward correct solution steps. Moreover, some researchers utilize LLM-driven search algorithms to automatically generate accurate reasoning trajectories through trial-and-error search \cite{ZhengC00WZL0LXZ23}, and then train Process Reward Models (PRMs) on these reasoning trajectories \cite{Uesato2022SolvingMW}, which provide dense and intermediate rewards to enable reinforcement learning over reasoning chains.

\subsection{Reinforcement Learning with Verifiable Rewards}
RLVR has emerged as a principled framework for enhancing reasoning capabilities, particularly in domains where correctness can be objectively assessed, such as mathematical problem solving. Instead of relying on human feedback or learned reward models, RLVR employs rule-based verification mechanisms to generate reward signals, enabling robust optimization of reasoning policies with minimal annotation cost. The effectiveness of RLVR is first demonstrated by OpenAI’s o1 series \cite{Contributors2024OpenAIOS}, and subsequent models such as DeepSeek-R1 \cite{DeepSeekAI2025DeepSeekR1IR} and the Qwen series \cite{Bai2023QwenTR} further advance the use of verifiable feedback in training reasoning LLMs. On the algorithmic front, GRPO \cite{Shao2024DeepSeekMathPT,DeepSeekAI2025DeepSeekR1IR} and its extensions \cite{Liu2025UnderstandingRT,Zhang2025SRPOAC,kong2025rethinking} have become foundational in enabling efficient reward-driven training. Follow-up frameworks such as DAPO \cite{Yu2025DAPOAO}, VAPO \cite{Yue2025VAPOEA}, SimpleRLZoo \cite{Zeng2025SimpleRLZooIA}, and Open-Reasoner-Zero \cite{Hu2025OpenReasonerZeroAO} further explore various design choices for policy optimization, reward shaping, and data reuse under verifiable reward settings. 

Despite the notable successes, existing RLVR strategies suffer from substantial resource demands. First, they typically require a large amount of reward signals to supervise policy learning. Although some efforts \cite{zhao2025ufo,Wang2025ReinforcementLF} reduce the number of training samples, they still rely on preselection throughout the dataset and thus do not fundamentally reduce the effective sample size. Second, GRPO-based algorithms incur significant computational overhead during training, as they rely on repeated rollouts over the entire dataset to estimate gradients and improve policies. This work takes resource efficiency as the central goal, aiming to reduce the data and compute demands of RLVR without sacrificing its reasoning performance.

\begin{figure}[htb]
	\begin{minipage}[b]{.49\linewidth}
		\centering
		\centerline{\includegraphics[width=4.3cm]{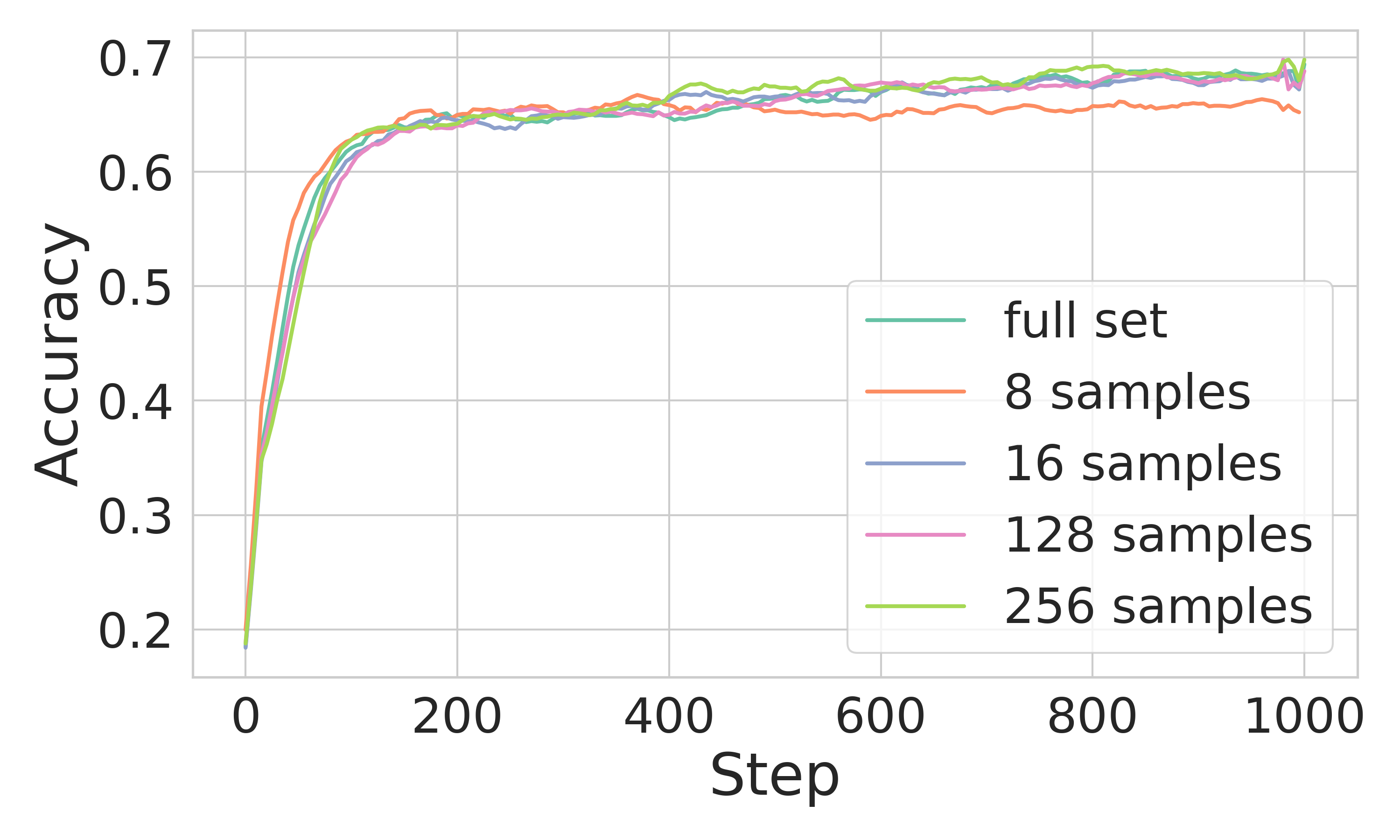}}
		\centerline{(a) Math}\medskip
	\end{minipage}
	\begin{minipage}[b]{.49\linewidth}
		\centering
		\centerline{\includegraphics[width=4.3cm]{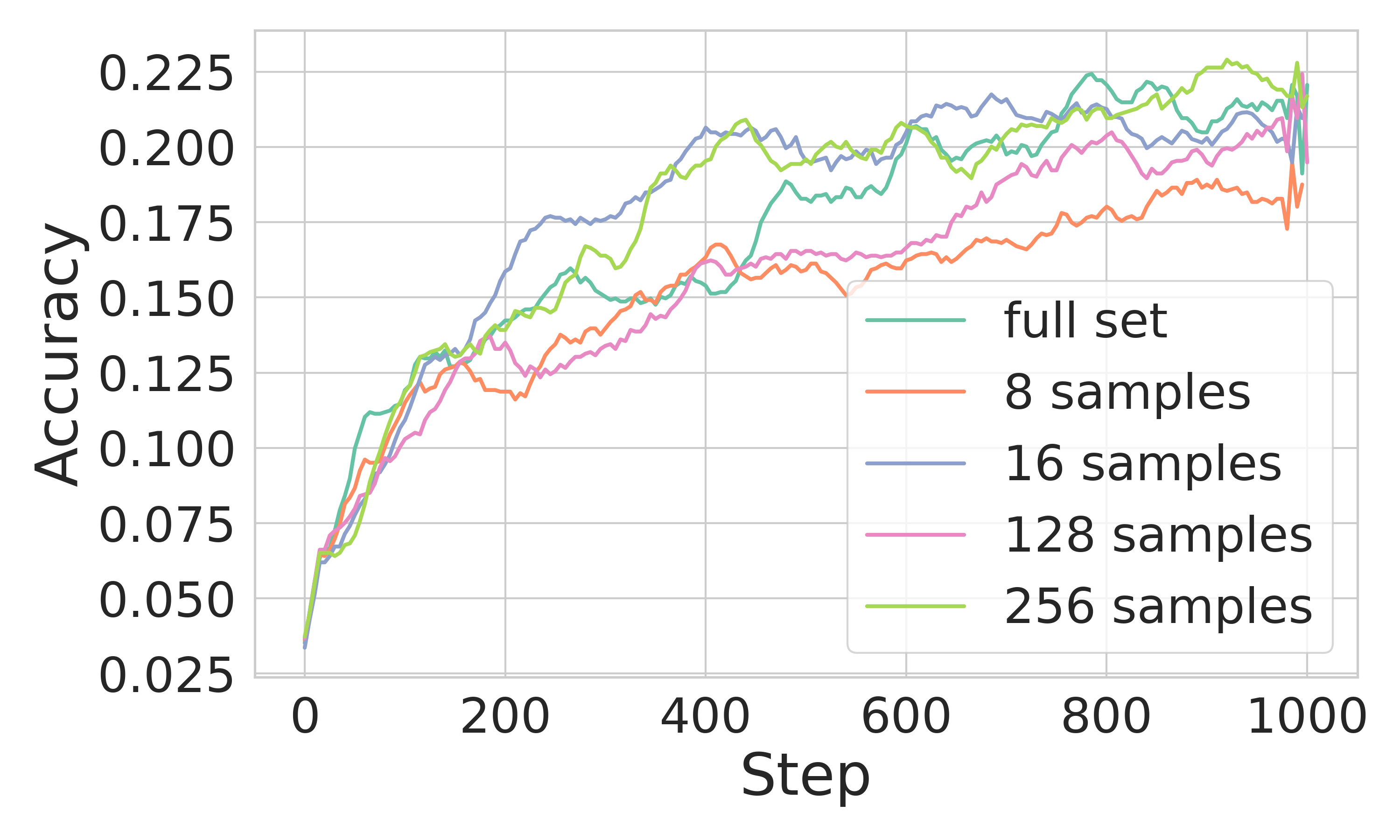}}
		\centerline{(b) Minierva-Math}\medskip
	\end{minipage}
	\caption{Accuracy on MATH and Minierva-MATH with varying training set sizes. Except for the 8-sample setting, all configurations converge to comparable performance, indicating that strong reasoning ability can be achieved with few training examples.}
	\label{fig:val_curve}
\end{figure}

\section{Rethinking Data Requirements for RLVR}
We begin by revisiting a fundamental question: \emph{How much data is truly required to unlock the reasoning capabilities of LLMs through RLVR?} Prior studies have explored various strategies for reducing the training data scale in RLVR \cite{Wang2025ReinforcementLF,Ye2025LIMOLI}. These methods typically rely on carefully crafted generation or selection strategies to identify a subset of high-quality samples for reinforcement learning. While such heuristics can be empirically effective, they provide only a pragmatic workaround rather than a principled understanding of the relationship between data volume and policy quality. Crucially, they do not address the core question of \emph{whether strong reasoning performance can be achieved with limited reward signals.} Beyond heuristic-based sample selection, we investigate whether a small number of training instances can provably yield non-trivial gains in reasoning performance under verifiable rewards. We ground our analysis in the policy gradient formulation used in RLVR, and derive a lower bound of required data scale on expected policy improvement.

For a pre-training policy $\pi_{\theta_0}$, its performance gap with the optimal policy $\pi_{\theta^{\star}}$ on a reasoning task is:
\begin{equation}
    \mathbb{E}_{x\sim\mathcal{D}}[P^{\pi_{\theta^{\star}}}(x)-P^{\pi_{\theta_0}}(x)]=\epsilon,
\end{equation}
where $x$ is an instance sampled from the dataset $\mathcal{D}$, and $P^{\pi}(x)$ denotes the expected verification reward.

\begin{theorem}
Consider an optimization procedure where the policy is updated using a single sample at each step, let $\pi_{\theta^\star}$ denotes the optimal policy and $\pi_{\theta_N}$ denotes the policy after $N$ updates. To guarantee that the expected performance gap satisfies
\begin{equation}
    \mathbb{E}_{x \sim \mathcal{D}} \left[ P^{\pi_{\theta^\star}}(x) - P^{\pi_{\theta_N}}(x) \right] \leq \epsilon',
\end{equation}
it suffices that the number of steps $N$ satisfies
\begin{equation}
    N \geq \mathcal{O}( \ln \frac{\epsilon}{\epsilon'}),
\end{equation}
where $\epsilon$ denotes the initial performance gap.
\end{theorem}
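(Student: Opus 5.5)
The plan is to show that each single-sample update shrinks the expected performance gap by a constant fraction. Iterating this gives geometric decay, and taking logarithms yields the bound on $N$.

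Write $\Delta_t = \mathbb{E}_{x\sim\mathcal{D}}[P^{\pi_{\theta^\star}}(x)-P^{\pi_{\theta_t}}(x)]$, so that $\Delta_0=\epsilon$. The target is a one-step contraction
\begin{equation*}
\mathbb{E}[\Delta_{t+1}\mid \theta_t] \le (1-c)\,\Delta_t
\end{equation*}
for some constant $c\in(0,1)$ that does not depend on $t$. To get this, I would expand the surrogate $J(\theta)=\mathbb{E}_x P^{\pi_\theta}(x)$ around $\theta_t$ and invoke $L$-smoothness:
\begin{equation*}
J(\theta_{t+1}) \ge J(\theta_t)+\langle\nabla J(\theta_t),\theta_{t+1}-\theta_t\rangle-\tfrac{L}{2}\|\theta_{t+1}-\theta_t\|^2 .
\end{equation*}
The update is $\theta_{t+1}=\theta_t+\eta g_t$, where $g_t$ is the single-sample policy-gradient estimator. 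Because $x\sim\mathcal{D}$ and the score-function estimator is unbiased, $\mathbb{E}[g_t]=\nabla J(\theta_t)$. The binary verifiable reward gives the bound $\mathbb{E}\|g_t\|^2\le G^2$, and one can also bound it by a multiple of $\|\nabla J\|^2$. Choosing the step size $\eta$ small enough then gives expected improvement of order $\eta\|\nabla J(\theta_t)\|^2/2$.

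The main obstacle is converting gradient norm into gap, i.e.\ establishing a gradient-domination (Polyak--\L{}ojasiewicz) condition
\begin{equation*}
\|\nabla J(\theta)\|^2 \ge \mu\,\big(J(\theta^\star)-J(\theta)\big).
\end{equation*}
Policy-gradient objectives do not satisfy this in general. My first attempt would use the known gradient-domination results for softmax/tabular parameterizations, which give a constant depending on $\min_a \pi_\theta(a\mid x)$ and on distribution mismatch. If that fails, I would simply adopt PL as an explicit regularity assumption on the pretrained policy's neighborhood. This is consistent with the paper's view that RL "activates" latent capability near $\theta_0$. The variance term also has to be absorbed. Here I would assume a strong-growth condition $\mathbb{E}\|g_t\|^2\le \rho\|\nabla J\|^2$, which is plausible because with binary rewards the variance vanishes as the policy approaches optimal. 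Under that assumption $\eta=1/(L\rho)$ gives $c=\mu/(2L\rho)$.

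With the contraction in hand, unrolling and taking total expectation gives $\mathbb{E}\Delta_N\le(1-c)^N\epsilon\le e^{-cN}\epsilon$. Requiring $e^{-cN}\epsilon\le\epsilon'$ yields
\begin{equation*}
N\ge \tfrac{1}{c}\ln\tfrac{\epsilon}{\epsilon'}=\mathcal{O}\!\left(\ln\tfrac{\epsilon}{\epsilon'}\right),
\end{equation*}
where the constant $1/c=2L\rho/\mu$ is absorbed into the $\mathcal{O}$. The statement is therefore sufficient, not tight.
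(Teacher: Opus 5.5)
Your argument is correct under its assumptions and has the same skeleton as the paper's proof: the descent inequality from $L$-smoothness, an assumed local PL condition, geometric unrolling, then a logarithm. The real difference is how you treat the gradient noise, and that step decides whether the bound is genuinely logarithmic.

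\textbf{The paper's route.} The paper assumes only bounded variance, $\mathrm{Var}(g_t)\le\delta^2$. Write $\alpha$ for the paper's step size, $c$ for its PL constant, and $\eta=\alpha c/2$ for its contraction rate. Bounded variance leaves an additive term $L\alpha^2\delta^2/2$ in the one-step bound. The paper absorbs it by imposing $L\alpha^2\delta^2/2\le\eta\epsilon'/2$, which gives $\mathbb{E}[\Delta_{t+1}]\le(1-\eta)\mathbb{E}[\Delta_t]+\eta\epsilon'/2$ and hence $\mathbb{E}[\Delta_t]\le(1-\eta)^t\epsilon+\epsilon'/2$. That constraint forces $\alpha\le c\epsilon'/(2L\delta^2)$, so $\eta$ shrinks in proportion to $\epsilon'$. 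The honest step count is then of order $\frac{L\delta^2}{c^2\epsilon'}\ln\frac{2\epsilon}{\epsilon'}$. The stated $\mathcal{O}(\ln\frac{\epsilon}{\epsilon'})$ follows only if $\eta$ is treated as independent of $\epsilon'$.

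\textbf{Your route.} Your strong-growth condition $\mathbb{E}\|g_t\|^2\le\rho\|\nabla J(\theta_t)\|^2$ removes the additive term entirely. It allows the fixed step $1/(L\rho)$ and gives a pure contraction at rate $\mu/(2L\rho)$, independent of $\epsilon$ and $\epsilon'$. So your version actually justifies the claimed form of the bound.

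This assumption is doing essential work, not just adding convenience. Your intermediate remark that a uniform bound $\mathbb{E}\|g_t\|^2\le G^2$ plus a small step already yields improvement of order $\eta\|\nabla J(\theta_t)\|^2/2$ is not right on its own. That bound reproduces exactly the paper's additive floor.

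\textbf{The cost.} You need a strictly stronger hypothesis than the paper's.
\begin{itemize}
\item Strong growth is an interpolation-type condition. Every single-prompt stochastic gradient must vanish wherever $\nabla J$ does, which is more than saying the variance vanishes for one prompt near its optimum.
\item Even in a single-prompt softmax bandit with binary reward, the ratio $\mathbb{E}\|g_t\|^2/\|\nabla J(\theta_t)\|^2$ equals $1/\pi_{\theta_t}(a^\star)$, where $a^\star$ is the correct answer. The condition is therefore benign only if the success probability stays bounded away from zero along the trajectory.
\item Like the paper, you also tacitly need the iterates to remain in the neighborhood where smoothness and PL are assumed to hold.
\end{itemize}
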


\begin{proof}
Let \( J(\theta) = \mathbb{E}_{x \sim \mathcal{D}} \left[ P^{\pi_\theta}(x) \right] \) denote the expected performance of the policy parameterized by \( \theta \). Assume that \( J \) is locally \( L \)-smooth \cite{Chen2025UnderstandingPA}, i.e., there exists a constant $L>0$ and a neighborhood $\mathcal{N}$ containing the initial policy $\theta_0$, such that the following Lipschitz inequality holds for all \( \theta, \theta' \in \mathcal{N} \):
\begin{equation}
    J(\theta') \geq J(\theta) + \langle \nabla J(\theta), \theta' - \theta \rangle - \frac{L}{2} \| \theta' - \theta \|^2.
\end{equation}
We remark that this assumption holds primarily in the vicinity of the optimization trajectory, and its validity is empirically ensured by conservative step sizes, KL regularization, and gradient clipping. By the policy gradient theorem, \( J \)'s gradient is given by:
\begin{equation}
    \nabla J(\theta) = \mathbb{E}_{x \sim \mathcal{D}} \left[ \nabla_{\theta} P^{\pi_\theta}(x) \right].
\end{equation}
Assume that the policy is updated at each step via gradient ascent with a fixed learning rate \( \alpha > 0 \):
\begin{equation}
    \theta_{t+1} = \theta_t + \alpha g_t,
\end{equation}
where \( g_t \) is an unbiased stochastic estimate of the true gradient, i.e., \( \mathbb{E}[g_t] = \nabla J(\theta_t) \), and the variance of \( g_t \) is bounded by $Var(g_t)\leq\delta^2$.
Taking expectations and applying the smoothness of \( J \), we obtain:
\begin{equation}
\small
    \mathbb{E}\left[J(\theta_{t+1}|\theta_t) \right]\geq J(\theta_t)+\alpha(1-\frac{L\alpha}{2})||\nabla J(\theta_t)||^2-\frac{L\alpha^2\delta^2}{2},
\end{equation}
Pretraining and supervised fine-tuning (SFT) endow the model with an initial reasoning capability. Coupled with the explicit gradient signal provided by RLVR, the objective \( J \) satisfies a local Polyak–Łojasiewicz (PL) condition in the neighborhood induced by pretraining and SFT \cite{Aich2025FromST,peng2024navigating}. Intuitively, the combination of pretraining/SFT (which shapes a favorable local manifold) and the supervised RLVR gradients renders the landscape locally well-conditioned for efficient optimization. That is, \( J \) satisfies
\begin{equation}
    \| \nabla J(\theta_t) \|^2 \geq c \left( J(\theta^\star) - J(\theta_t) \right) = c \Delta_t,
\end{equation}
for some constant \( c > 0 \), where \( \theta^\star \) is the optimal parameter and \( \Delta_t = J(\theta^\star) - J(\theta_t) \) denotes the performance gap at step \( t \). Then, we have:
\begin{equation}
\small
    \mathbb{E} \left[ J(\theta_{t+1}) - J(\theta_t) \mid \theta_t \right] \geq \alpha c (1 - \frac{L\alpha}{2}) \Delta_t - \frac{L\alpha^2 \delta^2}{2}.
\end{equation}
Let \( \eta = \alpha c / 2 \), and choose \( \alpha \) such that \( 1 - \frac{L\alpha}{2} \geq \frac{1}{2} \), we further require
\[
    \frac{L \alpha^2 \delta^2}{2} \leq \frac{\eta \epsilon'}{2},
\]
which leads to the recursive inequality:
\begin{equation}
    \mathbb{E} \left[ \Delta_{t+1} \mid \theta_t \right] \leq (1 - \eta) \Delta_t + \frac{\epsilon'}{2}.
\end{equation}
Unfolding this recurrence, we obtain:
\begin{equation}
    \mathbb{E} \left[ \Delta_t \right] \leq (1 - \eta)^t \Delta_0 + \frac{\epsilon'}{2},
\end{equation}
where \( \Delta_0 = J(\theta^\star) - J(\theta_0) = \epsilon \) is the initial gap. To ensure \( \mathbb{E} \left[ \Delta_t \right] \leq \epsilon' \), it suffices that:
\begin{equation}
    (1 - \eta)^t \epsilon \leq \frac{\epsilon'}{2}.
\end{equation}
Taking logarithms on both sides yields:
\begin{equation}
    t \geq \frac{\ln(\epsilon'/2) - \ln(\epsilon)}{\ln(1 - \eta)} = \mathcal{O}( \ln \frac{\epsilon}{\epsilon'}),
\end{equation}
For more detailed derivation, please refer to the appendix. \qedhere
\end{proof}

In practical applications, the initial policy (such as the Qwen-math series) is typically a version that has already undergone supervised fine-tuning, which inherently possesses a certain degree of reasoning capability. Consequently, the initial performance gap is typically small, implying that $\ln(\frac{\epsilon}{\epsilon'})$ remains modest. Therefore, convergence can be achieved within a constant number of training steps.

To assess the impact of training data size on RLVR performance, we conduct a series of experiments using the Qwen2.5-Math-1.5B model \cite{yang2024qwen25mathtechnicalreportmathematical} on the MATH \cite{Hendrycks2021MeasuringMP} and Minerva-MATH \cite{Aitor2022Solving} benchmarks, varying the number of training samples across five settings: 8, 16, 128, 256, and the full dataset (1209 examples). As shown in Figure \ref{fig:val_curve}, we observe that all configurations with 16 or more samples converge to nearly identical validation accuracy. This result corroborates our theoretical analyses, and provides empirical evidence that only a small subset of the data is sufficient to elicit strong reasoning capabilities in RLVR training. Our analyses reveal that substantial reductions in training data are possible without sacrificing final performance, paving the way for more efficient RL-based reasoning.

\begin{algorithm}[t]
	\caption{Dynamic One-Shot Policy Refinement (DoPR)}
	\label{alg:dopr}
	\begin{algorithmic}[1]
		\REQUIRE Initial policy $\pi_{\theta_0}$, reference policy $\pi_{\text{ref}}$, rollout budget $G$, batch size $K$, momentum factors $\rho_1$, $\rho_2$, exploration weight $\lambda$, total steps $T$
		\STATE Initialize reward statistics $\mu_i^0 \leftarrow 0$, $\sigma_i^{0,2} \leftarrow 0$, and selection counter $n_i \leftarrow 0$ for each sample $x_i$ in the dataset $\mathcal{D}$
		\FOR{$t = 1$ to $T$}
		\STATE Sample a mini-batch $\mathcal{B} = \{o_1, \dots, o_K\}$ from the dataset
		\FORALL{$x_i \in \mathcal{B}$}
		\STATE Perform one rollout $o_i^t \sim \pi_{\theta_t}(x_i)$
		\STATE Compute scalar reward $r_i^t$
		\STATE Update exponential moving average of mean and variance:
		\STATE \hspace{1em} $\mu_i^t \leftarrow \rho_1 \cdot r_i^t + (1 - \rho_1) \cdot \mu_i^{t-1}$
		\STATE \hspace{1em} $\sigma_i^{t,2} \leftarrow \rho_2 \cdot (r_i^t - \mu_i^t)^2 + (1 - \rho_2) \cdot \sigma_i^{t-1,2}$
		\STATE Compute EM-UCB exploration term: $U_i^t \leftarrow \text{Sigmoid}(\frac{H_i^t-\mu_H}{\delta_H+\epsilon})\cdot\sqrt{\frac{\log(t+1)}{n_i+1}}$
		\STATE Compute acquisition score: $S_i^t \leftarrow \sigma_i^t + U_i^t$
		\ENDFOR
		\STATE Select top-scoring instance: $o^* \leftarrow \arg\max_{o_i \in \mathcal{B}} S_i^t$
		\STATE Increment selection count: $n_{o^*} \leftarrow n_{o^*} + 1$
		\STATE Perform $G$ rollouts on $o^*$: $\{o_j\}_{j=1}^G \sim \pi_{\theta_t}(o^*)$
		\STATE Compute group rewards and token-wise advantages $\hat{A}_{j,t}$ for GRPO
		\STATE Compute policy gradient loss $\mathcal{L}_{\text{GRPO}}(\theta)$
		\STATE Update policy $\theta_{t+1} \leftarrow \theta_t - \eta \nabla_{\theta} \mathcal{L}_{\text{GRPO}}(\theta)$
		\ENDFOR 
        \RETURN Trained policy $\pi_{\theta_T}$
	\end{algorithmic}
\end{algorithm}

\begin{figure*}[ht]
	\centering
	\includegraphics[width=.8\textwidth]{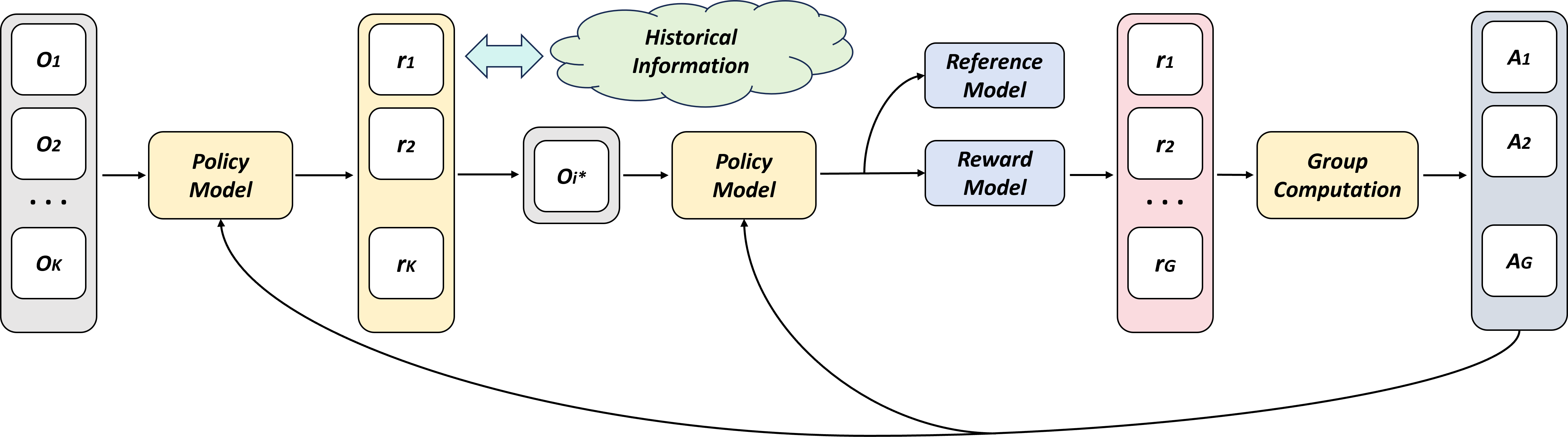}
	\caption{Overview of DoPR, which dynamically selects a single high-value training instance from each mini-batch based on historical reward statistics.}
	\label{fig:dopr}
\end{figure*}

\section{Dynamic One-Shot Policy Refinement}
Currently, reasoning model training often builds upon GRPO and its variants, which leverage group-wise relative rewards to estimate the advantages over the baseline. In particular, GRPO circumvents value function approximation by computing the average reward over multiple responses to the same input query, using this group-level statistic as a baseline for advantage estimation. For each question $q$, GRPO samples a group of $G$ responses $\{o_i\}_{i=1}^G$ from the current policy $\pi_{\theta_{old}}$, and optimizes the policy by maximizing the expected advantage over these responses:
{\small
\begin{equation}
\small
\begin{split}
    &\mathcal{J}_{\text{GRPO}}(\theta) = \ 
    \mathbb{E}_{q \sim P(Q), \{o_i\}_{i=1}^G \sim \pi_{\theta_{\text{old}}}} \bigg[
    \frac{1}{G} \sum_{i=1}^G \frac{1}{|o_i|} \sum_{t=1}^{|o_i|} \big\{ \\
    & \quad \min\left[ \mathcal{F}(\pi) \, \hat{A}_{i,t}, \; \operatorname{clip}_{1 - \epsilon}^{1 + \epsilon}(\mathcal{F}(\pi)) \hat{A}_{i,t} \right]-\beta \, \mathbb{D}_{\text{KL}}(\pi_\theta \,\|\, \pi_{\text{ref}}) \big\} \bigg],
\end{split}
\label{grpo}
\end{equation}
}
\begin{equation}
    \mathcal{F}(\pi)=\frac{\pi_\theta(o_{i,t} | q, o_{i,<t})}{\pi_{\theta_{old}}(o_{i,t} | q, o_{i,<t})},
\end{equation}
\begin{equation}
    \mathbb{D}_{\text{KL}}(\pi_\theta \,\|\, \pi_{\text{ref}})=\mathcal{F}(\pi)-\log \mathcal{F}(\pi)-1,
\end{equation}
where $\hat{A}_{i,t}$ is computed based on the group reward scores of the $t$-th token.

For each query in the batch, GRPO performs $G$ rollouts to estimate group-wise rewards, which are subsequently used to compute response-level advantages. Given a batch size of $K$, each policy update necessitates $K\times G$ rollouts, resulting in substantial computational overhead. To address this issue, we introduce Dynamic One-Shot Policy Refinement (DoPR), a lightweight and computationally efficient strategy that selectively refines the policy using only a single high-influence training instance, as shown in Figure \ref{fig:dopr}.

For each sample $o_i$ in a mini-batch $\{o_i\}_{i=1}^K$, we maintain a historical record of the reward statistics obtained from previous rollouts, including the mean $\mu_i$ and variance $\sigma_i^2$. In the $t$-th step, we execute a single rollout for $o_i$ using the current policy $\pi_{\theta_t}$, yielding a reward $r_i^{t}$, then calculate the exponentially weighted momentum estimates of the reward:
\begin{equation}
    \begin{aligned}
        \mu_i^{t} &= \rho_1 \cdot r_i^{t} + (1 - \rho_1) \cdot \mu_i^{t-1}, \\
        \sigma_i^{t,2} &= \rho_2 \cdot (r_i^{t} - \mu_i^{t})^2 + (1 - \rho_2) \cdot \sigma_i^{t-1,2},
    \end{aligned}
\end{equation}
where $\rho_1$ and $\rho_2$ are momentum hyperparameters controlling the temporal sensitivity. These momentum statistics allow us to capture the reward volatility of each sample over time, providing a principled measure of uncertainty under the current policy.

To address the exploration–exploitation trade-off, we propose an Entropy-Modulated Upper Confidence Bound (EM-UCB) acquisition score. Standard UCB \cite{kaelbling1994associative,garivier2011upper} treats all samples uniformly, inducing unnecessary exploration even when the policy has already converged on stable predictions. EM-UCB remedy this limitation by incorporating the policy entropy as a confidence-aware gate. As a result, exploratory updates are emphasized only for samples where the policy remains uncertain, aligning exploration with potential information gain, leading to more efficient rollout utilization without introducing additional training complexity.


\begin{equation}
    U_i^t = \text{Sigmoid}(\frac{H_i^t-\mu_H}{\delta_H+\epsilon})\cdot\sqrt{\frac{\log(t+1)}{n_i+1}},
\end{equation}
\begin{equation}
    H_i^t=-\frac{1}{\mathcal{T}}\sum^\mathcal{T}_{l=1}\sum_{v\in\mathcal{V}}\pi_\theta(v|o_i,<l)\log\pi_\theta(v|o_i,<l),
\end{equation}
where $n_i$ denotes the cumulative number of times sample $o_i$ has been selected, $\mu_H$, $\delta_H$ are the running mean and standard deviation of entropy values across the batch, $\mathcal{T}$ denotes the output length, and $\mathcal{V}$ is the vocabulary. The resulting EM-UCB term adaptively regulates exploration: when the policy exhibits high uncertainty (large $H_i^t$), the exploration bonus is amplified; when the policy is confident, the UCB contribution is suppressed, preventing unnecessary exploration on already-determined samples. Therefore, the composite acquisition score for each sample is defined as:
\begin{equation}
    S_i^{t} = \sigma_i^{t} + U_i^t.
\end{equation}
\begin{equation}
    i^{t} = \arg\max_{i} S_i^{t}.
\end{equation}
This scoring function balances exploitation (selecting samples with high reward variance) and exploration (favoring underused high-entropy samples). The core behind the exploitation is that the model is more uncertainty about the samples with high reward variance and high entropy, which are likely to be more informative for policy refinement.

After selecting the most informative sample $o_{i^{t}}$, we allocate the full rollout budget $G$ on it, collecting $G$ responses $\{o_{i^{t},j}\}_{j=1}^G$ from the current policy $\pi_{\theta_t}$. The policy is then updated using the group-wise relative rewards of these responses, following the GRPO formulation in Eq.(\ref{grpo}).

In contrast to the conventional GRPO that requires $K \times G$ rollouts per step, DoPR reduces the rollout cost to $G+(K-1)$, where $G$ rollouts are performed only on the selected sample and $(K-1)$ rollouts are performed on the remaining. This results in nearly an order-of-magnitude reduction in rollout cost, while still maintaining effective policy learning through targeted updates. 
DoPR’s design is simple, general, and easily integrable into existing RLVR pipelines. The pseudo code of DoPR is presented in Algorithm \ref{alg:dopr}.

\begin{table*}
	\caption{Pass@1 performance comparison across multiple mathematical reasoning benchmarks.}
	\label{tab:comparison-results}
	\centering
	\begin{tabular}{lccccccc}
		\toprule
		\rowcolor[HTML]{f8f9fa}
		\textbf{Method (Total Rollouts Fixed)} & \textbf{AIME24} & \textbf{AMC} & \textbf{MATH} & \textbf{MIN.} & \textbf{OLY.} & \textbf{GSM8K} & \textbf{Avg.} \\
		\midrule
		\rowcolor[HTML]{f8f9fa}
		\multicolumn{8}{l}{\textit{Baseline methods}} \\
		Qwen2.5-base \textbf{1.5B} & 0.0 & 0.0 & 3.2 & 2.2 & 2.4 & 3.9 & 1.9 \\
        Qwen2.5-base \textbf{7B} & 13.3 & 45 & 53.4 & 15.1 & 27.4 & 59.1 & 35.6 \\
		Qwen2.5-Math-base \textbf{1.5B} & 3.3 & 25.0 & 22.6 & 6.2 & 13.3 & 23.5 & 15.0 \\
        Qwen2.5-Math-base \textbf{7B} & 10 & 42.5 & 44.6 & 12.1 & 14.4 & 53.4 & 29.5 \\
		GRPO Qwen \textbf{1.5B} & 20.0 & 60.0 & 71.8 & 29.8 & 34.2 & 82.0 & 49.6 \\
        GRPO Qwen \textbf{7B} & 26.7 & 65 & 78.2 & 29.4 & 37.9 & 87.6 & 54.1 \\
        GRPO LLaMA \textbf{8B} & 3.3 & 10.0 & 25.8 & 11.4 & 6.5 & 70.9 & 21.3\\
		One-Shot RL Qwen \textbf{1.5B} & 13.3 & 55.2 & 66.6 & 19.5 & 29.8 & 76.6 & 43.5 \\
        One-Shot RL Qwen \textbf{7B} & 16.7 & 52.5 & 74.0 & 23.9 & 32.1 & 81.8 & 46.8 \\
        One-Shot RL LLaMA \textbf{8B} & 0.0 & 15.0 & 28.0& 16.9 & 7.7 & 72.8 & 17.6 \\
        UFO Qwen \textbf{1.5B} & 19.4 & 59.2 & 73.3 & 29.7 & 33.5 & 81.7 & 49.4 \\
        UFO Qwen \textbf{7B} & 27.9 & 52.8 & 73.5 & 38.7 & 38.2 & 85.3 & 52.7 \\
        UFO LLaMA \textbf{8B} & 3.1 & 14.4 & 26.8 & 16.4 & 8.1 & 72.6 & 23.6 \\
		\rowcolor[HTML]{f8f9fa}
		\multicolumn{8}{l}{ \textit{Our method}} \\
		DoPR Qwen \textbf{1.5B} & 19.7 & 59.3 & 73.5 & 29.4 & 33.9 & 82.2 & 49.6 \\
        DoPR Qwen \textbf{7B} & 30.0 & 52.5 & 73.8 & 39.0 & 37.9 & 85.6 & 53.1 \\
        DoPR LLaMA \textbf{8B} & 3.3 & 15.0 & 28.0 & 16.9& 7.7 & 72.8 & 24.0 \\
		\bottomrule
		\end{tabular}
\end{table*}

\section{Experiments}
\subsection{Experimental Settings}
We conduct experiments using the Qwen2.5-Math and LLaMA3.1 \cite{yang2024qwen25mathtechnicalreportmathematical} as the base language models, which are widely used for mathematical reasoning tasks. For training, we adopt the same setting with \cite{Wang2025ReinforcementLF}, which randomly selects a subset of 1209 high-quality reasoning examples from the DeepScaleR-Preview-Dataset \cite{deepscaler2025} to construct the training set, and use GRPO as the policy optimization backbone under consistent rollout configurations. The evaluation is performed on seven widely used mathematical reasoning benchmarks, including AIME24, AMC, MATH \cite{Hendrycks2021MeasuringMP}, Minerva Math \cite{Aitor2022Solving}, OlympiadBench \cite{Huang2024OlympicArenaBM}, and GSM8K \cite{cobbe2021gsm8k}. The evaluation metric is Pass@1 accuracy, which measures the percentage of correct answers in a single attempt. We choose UFO RL \cite{zhao2025ufo}, One-Shot RL \cite{Wang2025ReinforcementLF} without entropy strategy, and standard GRPO \cite{Shao2024DeepSeekMathPT} as the baselines, and the number of total training steps for all methods is set to 1000. From Figure \ref{fig:val_curve}, we observe that the sample size of 16 is sufficient to attain competitive performance, thus we adopt the 16-sample configuration for DoPR in experiments.

\subsection{Benchmarking Across Diverse Tasks}
Table~\ref{tab:comparison-results} presents the pass@1 performance of various reinforcement learning methods. We observe that RLVR brings substantial improvements over the base model. For instance on the Qwen2.5 1.5B model, GRPO outperforms the base model by a large margin across all datasets, improving the average accuracy from 37.8 to 49.6. This demonstrates the effectiveness of outcome-based policy optimization in enhancing reasoning capabilities. Our proposed method, DoPR, achieves comparable performance to GRPO with an average accuracy of 49.6, while requiring significantly fewer samples and rollouts during training, confirming its efficiency without sacrificing effectiveness. We also observe that One-Shot RL underperforms both GRPO and DoPR across most benchmarks. This performance gap can be attributed to its reliance on single-response updates, which provide limited gradient information and may lead to unstable training dynamics. In our experiments, One-Shot RL often exhibits early saturation and struggles to make further progress in later training stages due to gradient degradation. In contrast to uniform or random sampling approaches, DoPR dynamically identifies and exploits informative training instances to refine the policy, resulting in consistently stable learning behavior and improved sample efficiency throughout training.

\begin{table*}
	\caption{Pass@1 accuracy under varying sample budgets in RLVR training.}
	\label{tab:number-results}
	\centering
	\begin{tabular}{lccccccc}
		\toprule
		\rowcolor[HTML]{f8f9fa}
		\textbf{Method} & \textbf{AIME24} & \textbf{AMC} & \textbf{MATH} & \textbf{MIN.} & \textbf{OLY.} & \textbf{GSM8K} & \textbf{Avg.} \\
		\midrule
		\rowcolor[HTML]{f8f9fa}
		\multicolumn{8}{l}{\textit{Baseline methods}} \\
		GRPO (Full data) & 20.0 & 60.0 & 71.8 & 29.8 & 34.2 & 82.0 & 49.6 \\
		GRPO (256 samples) & 19.7 & 57.7 & 70.8 & 28.7 & 34.1 & 82.6 & 48.9 \\
		GRPO (128 samples) & 20.0 & 57.5 & 71.7 & 29.8 & 32.1 & 82.7 & 48.9 \\
		GRPO (16 samples) & 19.3 & 58.5 & 69.8 & 28.2 & 32.9 & 81.5 & 48.4 \\
		GRPO (8 samples) & 10.0 & 50.0 & 69.8 & 25.9 & 31.3 & 77.0 & 44.0 \\
		\rowcolor[HTML]{f8f9fa}
		\multicolumn{8}{l}{ \textit{Our methods}} \\
		DoPR (128 samples) & 19.9 & 60.1 & 73.0 & 27.8 & 32.7 & 82.1 & 49.3 \\
		DoPR (16 samples) & 19.7 & 59.3 & 73.5 & 29.4 & 33.9 & 82.2 & 49.6 \\
		DoPR (8 samples) & 9.3 & 44.7 & 68.9 & 26.4 & 30.8 & 77.4 & 42.9 \\
		\bottomrule
	\end{tabular}
\end{table*}

\subsection{Impact of Training Data Scale}
We compare the performance of GRPO and DoPR with different scale of training data on Qwen 1.5B, and the results are shown in Table~\ref{tab:number-results}. It can be observed that both GRPO and DoPR demonstrate remarkable robustness to data reduction. Specifically, performance remains largely stable when using 16 or more samples, with minimal degradation compared to training on the full dataset. For example, GRPO trained with only 16 samples achieves 69.8 on MATH and 81.5 on GSM8K, closely matching the full-data performance of 71.8 and 82.0, respectively. A comparable pattern is observed with DoPR, which also delivers strong results under the same data constraints. This suggests that effective policy learning via RLVR does not necessarily require extensive training corpora. However, when the training set is reduced to only 8 samples, a consistent drop in performance is observed across all benchmarks. This suggests a critical threshold below which the diversity and quantity of training data are insufficient to elicit the model’s full reasoning potential. Importantly, the performance trend remains consistent across both optimization strategies, implying that the underlying data requirement characteristics are shared across GRPO and DoPR. These results validate our theoretical insights regarding the minimal data requirement for RLVR, and further demonstrate that DoPR can retain high performance even in low-resource regimes, offering a practical solution for data-efficient reasoning model training.

\begin{table*}[t]
	\caption{Pass@1 accuracy under equal total rollout budget across different reinforcement learning strategies.}
	\label{tab:rollout-efficiency}
	\centering
	\begin{tabular}{lccccccc}
		\toprule
		\rowcolor[HTML]{f8f9fa}
		\textbf{Method (Total Rollouts Fixed)} & \textbf{AIME24} & \textbf{AMC} & \textbf{MATH} & \textbf{MIN.} & \textbf{OLY.} & \textbf{GSM8K} & \textbf{Avg.} \\
		\midrule
		\rowcolor[HTML]{f8f9fa}
		\multicolumn{8}{l}{\textit{Rollout budget: 5k}} \\
		GRPO         & 6.7 & 37.5 & 40.6 & 10.3 & 23.7 & 47.8 & 27.8 \\
		One-Shot RL  & 8.5 & 41.2 & 62.9 & 15.3 & 26.6 & 70.5 & 37.5 \\
		DoPR        & 14.6 & 46.0 & 66.4 & 20.9 & 30.1 & 75.2 & 42.2 \\
		\rowcolor[HTML]{f8f9fa}
		\multicolumn{8}{l}{\textit{Rollout budget: 10k}} \\
		GRPO         & 10.0 & 50.0 & 52.0 & 12.1 & 25.6 & 61.5 & 35.2 \\
		One-Shot RL  & 9.8 & 44.7 & 67.2 & 18.3 & 28.6 & 75.5 & 40.7 \\
		DoPR        & 18.2 & 57.1 & 67.5 & 26.3 & 30.7 & 77.1 & 46.1 \\
		\rowcolor[HTML]{f8f9fa}
		\multicolumn{8}{l}{\textit{Rollout budget: 30k}} \\
		GRPO         & 12.3 & 50.0 & 66.1 & 18.0 & 28.3 & 76.1 & 41.8 \\
		One-Shot RL  & 10.6 & 44.3 & 68.7 & 18.9 & 29.5 & 76.7 & 41.5 \\
		DoPR        & 18.6 & 57.9 & 74.3 & 30.2 & 33.7 & 80.1 & 49.1 \\
		\rowcolor[HTML]{f8f9fa}
		\multicolumn{8}{l}{\textit{Rollout budget: 250k}} \\
		GRPO         & 19.7 & 59.9 & 72.3 & 29.5 & 33.7 & 82.4 & 49.6 \\
		One-Shot RL & 13.8 & 54.2 & 66.1 & 19.1 & 30.8 & 77.6 & 43.6 \\
		DoPR        & 19.7 & 58.3 & 73.5 & 29.2 & 33.1 & 83.1 & 49.4 \\
		\bottomrule
	\end{tabular}
\end{table*}

\subsection{Comparison under Equal Rollout Budgets}
To evaluate the rollout efficiency of different reinforcement learning strategies, we conduct a series of experiments under fixed rollout budgets ranging from 5k to 50k. Table~\ref{tab:rollout-efficiency} summarizes the Pass@1 accuracy across various reasoning benchmarks for GRPO, One-Shot RL, and our proposed DoPR. It can be seen that under constrained rollout budgets, DoPR demonstrates a substantial performance advantage over GRPO. For instance, with only 10k total rollouts, DoPR achieves comparable or even superior Pass@1 accuracy relative to GRPO trained with 50k rollouts. This underscores the inefficiency of GRPO’s rollout strategy, which fails to distinguish high-impact samples from less informative ones during training. Consequently, a larger number of rollouts is required to reach similar levels of reasoning capability.  In contrast, DoPR leverages a dynamic selection mechanism to identify and focus on the most informative samples at each iteration, guided by a composite score that captures both reward uncertainty and exploration value. This utility-aware update strategy ensures that each rollout contributes more effectively to policy improvement, enabling faster convergence and stronger performance with significantly reduced computational cost. One-Shot RL also achieves competitive performance in extremely low-budget settings (e.g., 5k–10k rollouts). However, it fails to improve further as the rollout budget increases, due to its reliance on single-instance updates and lack of adaptive sample prioritization. In contrast, DoPR continues to scale effectively with larger budgets, eventually reaching performance parity with GRPO while maintaining superior efficiency throughout the training process. This reveals the core strength of DoPR: its ability to maximize the utility of limited rollout resources through principled and information-aware update scheduling. This makes it a practical and scalable solution for reinforcement learning in settings where rollout and computational efficiency are critical.

\begin{table*}[t]
	\caption{Ablation study of DoPR. We report Pass@1 after convergence and under a fixed rollout budget.}
	\label{tab:ablation}
	\centering
	\begin{tabular}{lccccccc}
		\toprule
		\rowcolor[HTML]{f8f9fa}
		\textbf{Method (Total Rollouts Fixed)} & \textbf{AIME24} & \textbf{AMC} & \textbf{MATH} & \textbf{MIN.} & \textbf{OLY.} & \textbf{GSM8K} & \textbf{Avg.} \\
		\midrule
		\rowcolor[HTML]{f8f9fa}
		\multicolumn{8}{l}{\textit{Final Performance}} \\
		DoPR        & 19.7 & 59.3 & 73.5 & 29.4 & 33.9 & 82.2 & 49.6 \\
		DoPR-UCB      & 19.3 & 59.0 & 73.6 & 29.1 & 33.9 & 81.9 & 49.5 \\
		DoPR-None   & 18.9 & 58.7 & 73.4 & 29.3 & 33.6 & 80.5 & 49.0 \\
		\rowcolor[HTML]{f8f9fa}
		\multicolumn{8}{l}{\textit{10k Rollout Budget}} \\
		DoPR        & 17.4 & 56.2 & 67.0 & 25.9 & 30.2 & 77.3 & 45.7 \\
		DoPR-UCB      & 16.7 & 55.0 & 66.9 & 25.4 & 29.5 & 76.9 & 45.1 \\
		DoPR-None   & 10.7 & 40.0 & 66.8 & 24.4 & 30.1 & 77.5 & 41.6 \\
		\bottomrule
	\end{tabular}
\end{table*}

\subsection{Ablation Studies}
We conduct an ablation study to investigate the effectiveness of the EM-UCB term. As shown in Table~\ref{tab:ablation}, all methods exhibit similar performance after converging, suggesting that the ultimate reasoning capability is primarily determined by the underlying reinforcement learning framework itself, and that the choice of individual training samples per update has limited influence on the final performance ceiling. In contrast, when the rollout budget is constrained, full DoPR substantially outperforms both ablated variants. This demonstrates the advantage of the proposed EM-UCB strategy in improving learning efficiency by directing rollouts toward more informative instances.

\begin{figure}[htb]
	\begin{minipage}[b]{.49\linewidth}
		\centering
		\centerline{\includegraphics[width=4.3cm]{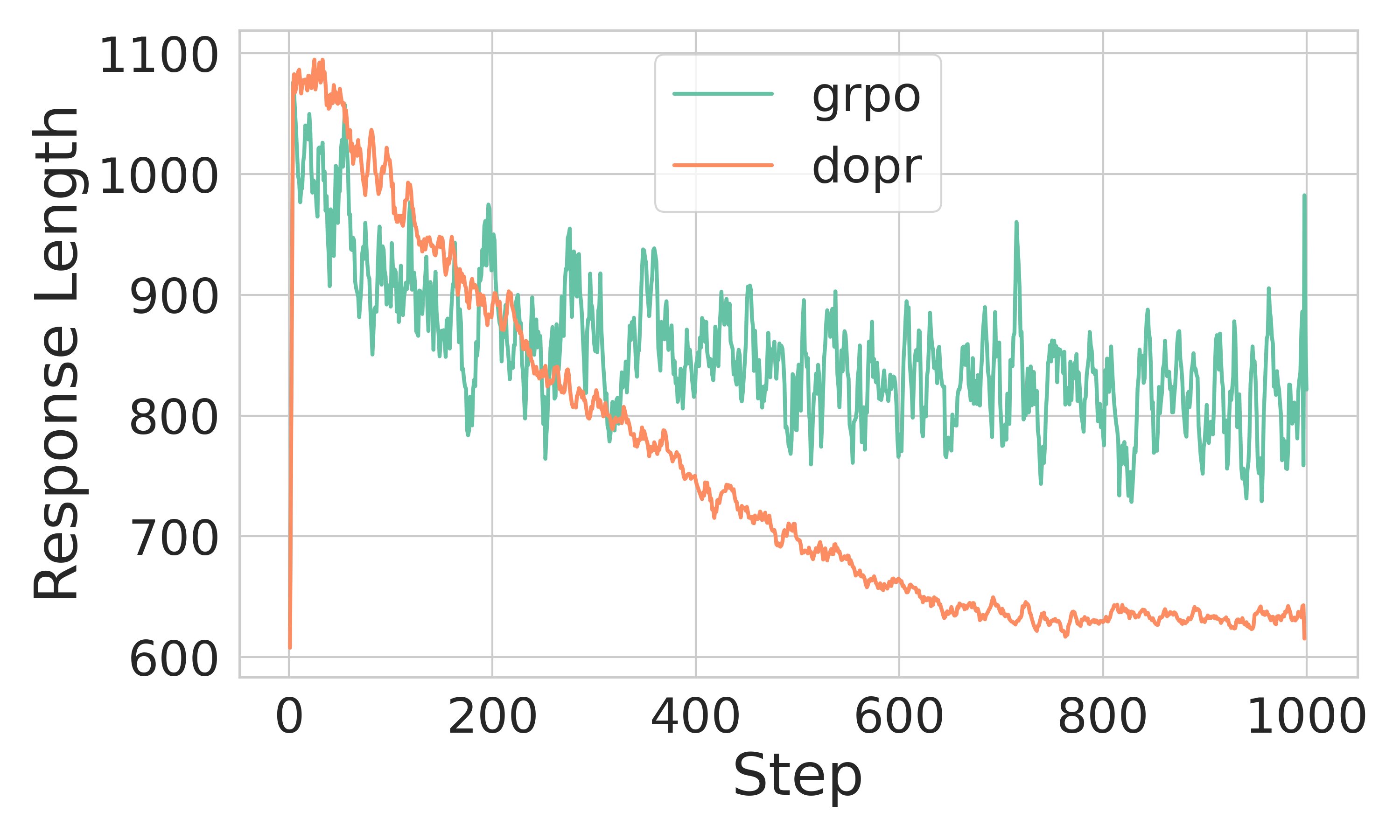}}
		\centerline{(a) Response length}\medskip
	\end{minipage}
	\begin{minipage}[b]{.49\linewidth}
		\centering
		\centerline{\includegraphics[width=4.3cm]{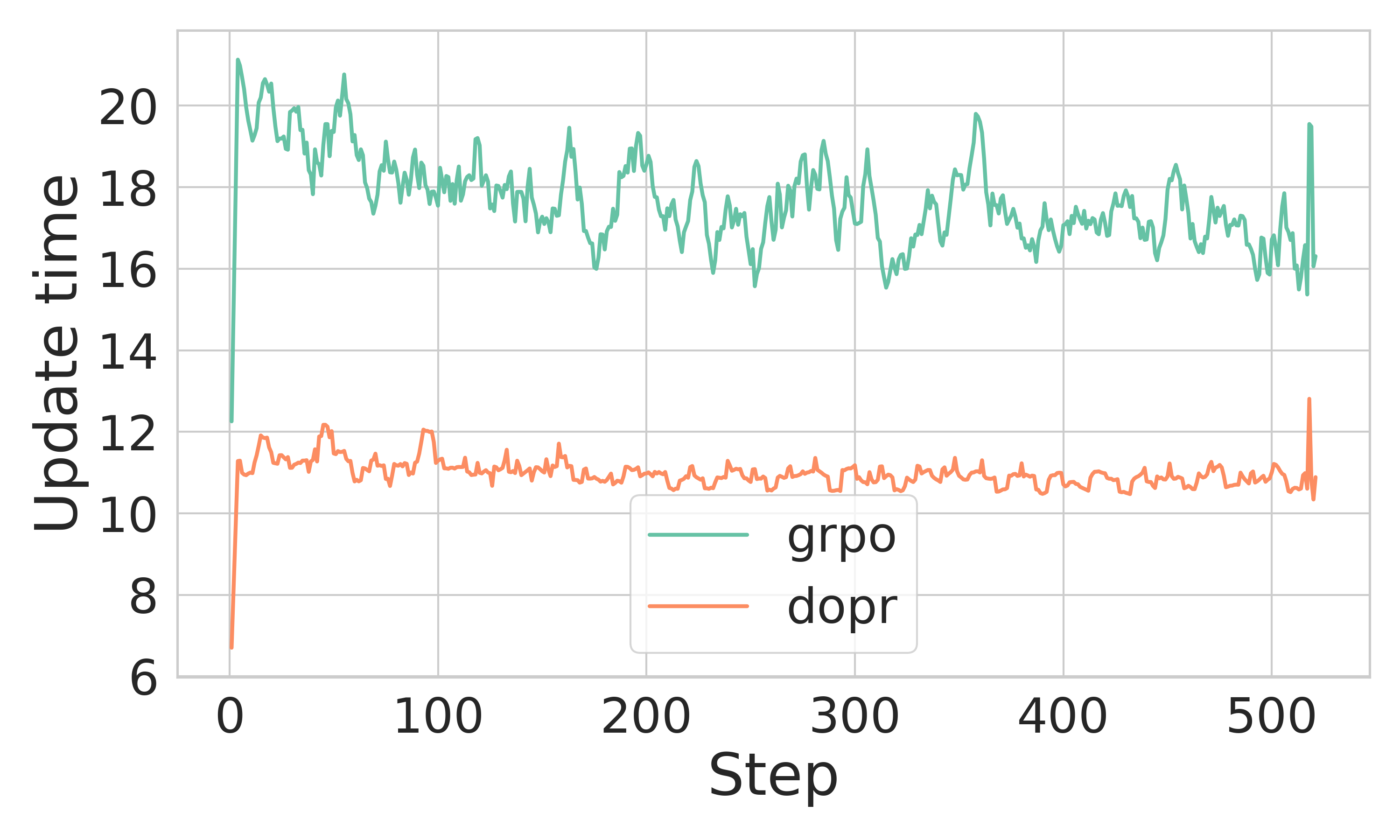}}
		\centerline{(b) Update time}\medskip
	\end{minipage}
	\caption{DoPR consistently yields shorter reasoning trajectories and faster update cycles, indicating improved runtime efficiency and a reduced computational burden for training.}
	\label{fig:efficiency}
\end{figure}

\subsection{Efficiency in Rollout and Update Time}
We further evaluate the computational efficiency of DoPR by analyzing two key metrics: the average response length and the per-step update time in training. As shown in Figure \ref{fig:efficiency}, GRPO maintains a relatively constant response length across training steps, reflecting its fixed group sampling strategy that does not adapt to the model's evolving confidence. In contrast, DoPR exhibits a rapid decline in response length over the course of training, eventually stabilizing at a significantly lower level. This indicates that our method progressively learns to generate more concise reasoning trajectories, which may due to the targeted sample selection and more confident policy updates. In terms of the update time, DoPR consistently achieves lower latency per training step compared to GRPO. This improvement stems from DoPR’s single-instance update mechanism, which requires only $G + (K-1)$ rollouts per step rather than the full $K \times G$ rollouts used by GRPO. The update time of DoPR remains stable throughout training, demonstrating its scalability and robustness under fixed compute budgets. Overall, these observations indicate that DoPR not only accelerates convergence but also improves runtime efficiency at both inference and training stages, making it a promising candidate for large-scale deployment in cost-sensitive environments.

\section{Conclusion}
In this work, we present a resource-efficient reinforcement learning framework for training reasoning-capable language models under RLVR. We begin by establishing a theoretical lower bound on the sample complexity of RLVR, providing the first formal understanding of data requirements during post-training. Empirically, we show that near-optimal reasoning performance can be achieved with surprisingly few training examples, challenging the prevailing assumption that large-scale datasets are indispensable for RL-based alignment. In parallel, we propose Dynamic One-Shot Policy Refinement (DoPR), a computation-efficient learning algorithm that significantly reduces rollout cost by dynamically selecting a single high-impact training sample per batch for policy updates. By combining reward volatility with exploration-aware scoring, DoPR achieves competitive superior reasoning performance across a suite of mathematical benchmarks, while dramatically reducing computational overhead. Together, our theoretical and algorithmic contributions offer a more sustainable path forward for scaling LLM reasoning training, both in terms of data requirement and compute efficiency.

\bibliography{acl_latex}

@article{Shao2024DeepSeekMathPT,
	title={DeepSeekMath: Pushing the Limits of Mathematical Reasoning in Open Language Models},
	author={Zhihong Shao and Peiyi Wang and Qihao Zhu and Runxin Xu and Jun-Mei Song and Mingchuan Zhang and Y. K. Li and Yu Wu and Daya Guo},
	journal={ArXiv},
	year={2024},
	volume={abs/2402.03300}
}

@article{DeepSeekAI2025DeepSeekR1IR,
	title={DeepSeek-R1: Incentivizing Reasoning Capability in LLMs via Reinforcement Learning},
	author={DeepSeek-AI and Daya Guo and Dejian Yang and Haowei Zhang and Jun-Mei Song and Ruoyu Zhang and Runxin Xu and Qihao Zhu and Shirong Ma and Peiyi Wang et al.},
	journal={ArXiv},
	year={2025},
	volume={abs/2501.12948}
}

@article{Liu2025UnderstandingRT,
	title={Understanding R1-Zero-Like Training: A Critical Perspective},
	author={Zi-Yan Liu and Changyu Chen and Wenjun Li and Penghui Qi and Tianyu Pang and Chao Du and Wee Sun Lee and Min Lin},
	journal={ArXiv},
	year={2025},
	volume={abs/2503.20783}
}

@article{Achiam2023GPT4TR,
  title={Gpt-4 Technical Report},
  author={Achiam, Josh and Adler, Steven and Agarwal, Sandhini and Ahmad, Lama and Akkaya, Ilge and Aleman, Florencia Leoni and Almeida, Diogo and Altenschmidt, Janko and Altman, Sam and Anadkat, Shyamal and others},
  journal={Arxiv},
  year={2023},
  volume={abs/2303.08774}
}

@article{Hu2025OpenReasonerZeroAO,
	title={Open-Reasoner-Zero: An Open Source Approach to Scaling Up Reinforcement Learning on the Base Model},
	author={Jingcheng Hu and Yinmin Zhang and Qi Han and Daxin Jiang and Xiangyu Zhang and Heung-yeung Shum},
	journal={ArXiv},
	year={2025},
	volume={abs/2503.24290}
}

@article{Contributors2024OpenAIOS,
	title={OpenAI o1 System Card},
	author={Foundational Contributors and Ahmed El-Kishky and Daniel Selsam and Francis Song and Giambattista Parascandolo and Hongyu Ren and Hunter Lightman and Hyung Won and Ilge Akkaya and Ilya Sutskever et al.},
	journal={ArXiv},
	year={2024},
	volume={abs/2412.16720}
}

@article{Reid2024Gemini1U,
	title={Gemini 1.5: Unlocking multimodal understanding across millions of tokens of context},
	author={Machel Reid and Nikolay Savinov and Denis Teplyashin and Dmitry Lepikhin and Timothy P. Lillicrap and Jean-Baptiste Alayrac and Radu Soricut and Angeliki Lazaridou and Orhan Firat and Julian Schrittwieser et al.},
	journal={ArXiv},
	year={2024},
	volume={abs/2403.05530}
}

@article{Zhang2025RightQI,
	title={Right Question is Already Half the Answer: Fully Unsupervised LLM Reasoning Incentivization},
	author={Qingyang Zhang and Haitao Wu and Changqing Zhang and Peilin Zhao and Yatao Bian},
	journal={ArXiv},
	year={2025},
	volume={abs/2504.05812}
}

@article{Lin2025CPPOAT,
	title={CPPO: Accelerating the Training of Group Relative Policy Optimization-Based Reasoning Models},
	author={Zhihang Lin and Mingbao Lin and Yuan Xie and Rongrong Ji},
	journal={ArXiv},
	year={2025},
	volume={abs/2503.22342}
}

@article{Xiong2025AMA,
	title={A Minimalist Approach to LLM Reasoning: from Rejection Sampling to Reinforce},
	author={Wei Xiong and Jiarui Yao and Yuhui Xu and Bo Pang and Lei Wang and Doyen Sahoo and Junnan Li and Nan Jiang and Tong Zhang and Caiming Xiong and Hanze Dong},
	journal={ArXiv},
	year={2025},
	volume={abs/2504.11343}
}

@inproceedings{Chen2024SelfPlayFC,
	author       = {Zixiang Chen and
	Yihe Deng and
	Huizhuo Yuan and
	Kaixuan Ji and
	Quanquan Gu},
	title        = {Self-Play Fine-Tuning Converts Weak Language Models to Strong Language
	Models},
	booktitle    = {International Conference on Machine Learning (ICML)},
	year         = {2024}
}

@inproceedings{Tajwar2024Preference,
	author       = {Fahim Tajwar and
	Anikait Singh and
	Archit Sharma and
	Rafael Rafailov and
	Jeff Schneider and
	Tengyang Xie and
	Stefano Ermon and
	Chelsea Finn and
	Aviral Kumar},
	title        = {Preference Fine-Tuning of LLMs Should Leverage Suboptimal, On-Policy
	Data},
	booktitle    = {International Conference on Machine Learning (ICML)},
	year         = {2024}
}

@inproceedings{Dong2023HowAI,
	title={How Abilities in Large Language Models are Affected by Supervised Fine-tuning Data Composition},
	author={Guanting Dong and Hongyi Yuan and Keming Lu and Chengpeng Li and Mingfeng Xue and Dayiheng Liu and Wei Wang and Zheng Yuan and Chang Zhou and Jingren Zhou},
	booktitle={Annual Meeting of the Association for Computational Linguistics (ACL)},
	year={2023}
}

@article{Gao2024OnDE,
	title={On Designing Effective RL Reward at Training Time for LLM Reasoning},
	author={Jiaxuan Gao and Shusheng Xu and Wenjie Ye and Weiling Liu and Chuyi He and Wei Fu and Zhiyu Mei and Guangju Wang and Yi Wu},
	journal={ArXiv},
	year={2024},
	volume={abs/2410.15115}
}

@article{Lambert2024TLU3P,
	title={T{\"U}LU 3: Pushing Frontiers in Open Language Model Post-Training},
	author={Nathan Lambert and Jacob Daniel Morrison and Valentina Pyatkin and Shengyi Huang and Hamish Ivison and Faeze Brahman and Lester James Validad Miranda and Alisa Liu and Nouha Dziri and Xinxi Lyu et al.},
	journal={ArXiv},
	year={2024},
	volume={abs/2411.15124}
}

@article{Team2025KimiKS,
	title={Kimi k1.5: Scaling Reinforcement Learning with LLMs},
	author={Kimi Team and Angang Du and Bofei Gao and Bowei Xing and Changjiu Jiang and Cheng Chen and Cheng Li and Chenjun Xiao and Chenzhuang Du and Chonghua Liao et al.},
	journal={ArXiv},
	year={2025},
	volume={abs/2501.12599}
}

@article{Wang2025BeyondT8,
	title={Beyond the 80/20 Rule: High-Entropy Minority Tokens Drive Effective Reinforcement Learning for LLM Reasoning},
	author={Shenzhi Wang and Le Yu and Chang Gao and Chujie Zheng and Shixuan Liu and Rui Lu and Kai Dang and Xionghui Chen and Jianxin Yang and Zhenru Zhang and Yuqiong Liu and An Yang and Andrew Zhao and Yang Yue and Shiji Song and Bowen Yu and Gao Huang and Junyang Lin},
	journal={ArXiv},
	year={2025},
	volume={abs/2506.01939}
}

@article{Wang2025ReinforcementLF,
	title={Reinforcement Learning for Reasoning in Large Language Models with One Training Example},
	author={Yiping Wang and Qing Yang and Zhiyuan Zeng and Liliang Ren and Lucas Liu and Baolin Peng and Hao Cheng and Xuehai He and Kuan Wang and Jianfeng Gao et al.},
	journal={ArXiv},
	year={2025},
	volume={abs/2504.20571}
}

@article{Chen2025SEEDGRPOSE,
	title={SEED-GRPO: Semantic Entropy Enhanced GRPO for Uncertainty-Aware Policy Optimization},
	author={Minghan Chen and Guikun Chen and Wenguan Wang and Yi Yang},
	journal={ArXiv},
	year={2025},
	volume={abs/2505.12346}
}

@article{Yue2025VAPOEA,
	title={VAPO: Efficient and Reliable Reinforcement Learning for Advanced Reasoning Tasks},
	author={Yu Yue and Yufeng Yuan and Qiying Yu and Xiaochen Zuo and Ruofei Zhu and Wenyuan Xu and Jiaze Chen and Chengyi Wang and Tiantian Fan and Zhengyin Du and Xiang Wei and Xiangyu Yu and Gaohong Liu and Juncai Liu and Lingjun Liu and Haibin Lin and Zhiqi Lin and Bole Ma and Chi Zhang and Mofan Zhang and Wang Zhang and Hang Zhu and Ru Zhang and Xin Liu and Mingxuan Wang and Yong-Xu Wu and Lin Yan},
	journal={ArXiv},
	year={2025},
	volume={abs/2504.05118}
}

@article{Xu2025NotAR,
	title={Not All Rollouts are Useful: Down-Sampling Rollouts in LLM Reinforcement Learning},
	author={Yixuan Even Xu and Yash Savani and Fei Fang and Zico Kolter},
	journal={ArXiv},
	year={2025},
	volume={abs/2504.13818}
}

@article{Cui2025TheEM,
	title={The Entropy Mechanism of Reinforcement Learning for Reasoning Language Models},
	author={Ganqu Cui and Yuchen Zhang and Jiacheng Chen and Lifan Yuan and Zhi Wang and Yuxin Zuo and Hao-Si Li and Yuchen Fan and Huayu Chen and Weize Chen and Zhiyuan Liu and Hao Peng and Lei Bai and Wanli Ouyang and Yu Cheng and Bowen Zhou and Ning Ding},
	journal={ArXiv},
	year={2025},
	volume={abs/2505.22617}
}

@inproceedings{KojimaGRMI22,
	author       = {Takeshi Kojima and
	Shixiang Shane Gu and
	Machel Reid and
	Yutaka Matsuo and
	Yusuke Iwasawa},
	editor       = {Sanmi Koyejo and
	S. Mohamed and
	A. Agarwal and
	Danielle Belgrave and
	K. Cho and
	A. Oh},
	title        = {Large Language Models are Zero-Shot Reasoners},
	booktitle    = {Advances in Neural Information Processing Systems (NeurIPS)},
	year         = {2022}
}

@inproceedings{ZhouSHWS0SCBLC23,
	author       = {Denny Zhou and
	Nathanael Sch{\"{a}}rli and
	Le Hou and
	Jason Wei and
	Nathan Scales and
	Xuezhi Wang and
	Dale Schuurmans and
	Claire Cui and
	Olivier Bousquet and
	Quoc V. Le and
	Ed H. Chi},
	title        = {Least-to-Most Prompting Enables Complex Reasoning in Large Language
	Models},
	booktitle    = {International Conference on Learning Representations (ICLR)},
	year         = {2023}
}

@inproceedings{Wei0SBIXCLZ22,
	author       = {Jason Wei and
	Xuezhi Wang and
	Dale Schuurmans and
	Maarten Bosma and
	Brian Ichter and
	Fei Xia and
	Ed H. Chi and
	Quoc V. Le and
	Denny Zhou},
	editor       = {Sanmi Koyejo and
	S. Mohamed and
	A. Agarwal and
	Danielle Belgrave and
	K. Cho and
	A. Oh},
	title        = {Chain-of-Thought Prompting Elicits Reasoning in Large Language Models},
	booktitle    = {Advances in Neural Information Processing Systems (NeurIPS)},
	year         = {2022}
}

@inproceedings{YaoYZS00N23,
	author       = {Shunyu Yao and
	Dian Yu and
	Jeffrey Zhao and
	Izhak Shafran and
	Tom Griffiths and
	Yuan Cao and
	Karthik Narasimhan},
	editor       = {Alice Oh and
	Tristan Naumann and
	Amir Globerson and
	Kate Saenko and
	Moritz Hardt and
	Sergey Levine},
	title        = {Tree of Thoughts: Deliberate Problem Solving with Large Language Models},
	booktitle    = {Advances in Neural Information Processing Systems (NeurIPS)},
	year         = {2023}
}

@inproceedings{WSLCNCZ23,
	author       = {Xuezhi Wang and
	Jason Wei and
	Dale Schuurmans and
	Quoc V. Le and
	Ed H. Chi and
	Sharan Narang and
	Aakanksha Chowdhery and
	Denny Zhou},
	title        = {Self-Consistency Improves Chain of Thought Reasoning in Language Models},
	booktitle    = {International Conference on Learning Representations (ICLR)},
	year         = {2023}
}

@inproceedings{Besta2023GraphOT,
	title={Graph of Thoughts: Solving Elaborate Problems with Large Language Models},
	author={Maciej Besta and Nils Blach and Ale Kubek and Robert Gerstenberger and Lukas Gianinazzi and Joanna Gajda and Tomasz Lehmann and Michal Podstawski and Hubert Niewiadomski and Piotr Nyczyk and Torsten Hoefler},
	booktitle={AAAI Conference on Artificial Intelligence (AAAI)},
	year={2023}
}

@article{Ye2025LIMOLI,
	title={LIMO: Less is More for Reasoning},
	author={Yixin Ye and Zhen Huang and Yang Xiao and Ethan Chern and Shijie Xia and Pengfei Liu},
	journal={ArXiv},
	year={2025},
	volume={abs/2502.03387}
}

@article{Uesato2022SolvingMW,
	title={Solving math word problems with process- and outcome-based feedback},
	author={Jonathan Uesato and Nate Kushman and Ramana Kumar and Francis Song and Noah Siegel and L. Wang and Antonia Creswell and Geoffrey Irving and Irina Higgins},
	journal={ArXiv},
	year={2022},
	volume={abs/2211.14275}
}

@inproceedings{ZhengC00WZL0LXZ23,
	author       = {Lianmin Zheng and
	Wei{-}Lin Chiang and
	Ying Sheng and
	Siyuan Zhuang and
	Zhanghao Wu and
	Yonghao Zhuang and
	Zi Lin and
	Zhuohan Li and
	Dacheng Li and
	Eric P. Xing et al.},
	title        = {Judging LLM-as-a-Judge with MT-Bench and Chatbot Arena},
	booktitle    = {Advances in Neural Information Processing Systems (NeurIPS)},
	year         = {2023}
}

@article{Bai2023QwenTR,
	title={Qwen Technical Report},
	author={Jinze Bai and Shuai Bai and Yunfei Chu and Zeyu Cui and Kai Dang and Xiaodong Deng and Yang Fan and Wenhang Ge and Yu Han and Fei Huang et al.},
	journal={ArXiv},
	year={2023},
	volume={abs/2309.16609}
}

@article{Yu2025DAPOAO,
	title={DAPO: An Open-Source LLM Reinforcement Learning System at Scale},
	author={Qiying Yu and Zheng Zhang and Ruofei Zhu and Yufeng Yuan and Xiaochen Zuo and Yu Yue and Tiantian Fan and Gaohong Liu and Lingjun Liu and Xin Liu et al.},
	journal={ArXiv},
	year={2025},
	volume={abs/2503.14476}
}

@article{Zhang2025SRPOAC,
	title={SRPO: A Cross-Domain Implementation of Large-Scale Reinforcement Learning on LLM},
	author={Xiaojiang Zhang and Jinghui Wang and Zifei Cheng and Wenhao Zhuang and Zheng Lin and Minglei Zhang and Shaojie Wang and Yinghan Cui and Chao Wang and Junyi Peng},
	journal={ArXiv},
	year={2025},
	volume={abs/2504.14286}
}

@article{Zeng2025SimpleRLZooIA,
	title={SimpleRL-Zoo: Investigating and Taming Zero Reinforcement Learning for Open Base Models in the Wild},
	author={Weihao Zeng and Yuzhen Huang and Qian Liu and Wei Liu and Keqing He and Zejun Ma and Junxian He},
	journal={ArXiv},
	year={2025},
	volume={abs/2503.18892}
}

@article{yang2024qwen25mathtechnicalreportmathematical,
	title={Qwen2.5-Math Technical Report: Toward Mathematical Expert Model via Self-Improvement}, 
	author={An Yang and Beichen Zhang and Binyuan Hui and Bofei Gao and Bowen Yu and Chengpeng Li and Dayiheng Liu and Jianhong Tu and Jingren Zhou and Junyang Lin and Keming Lu and Mingfeng Xue and Runji Lin and Tianyu Liu and Xingzhang Ren and Zhenru Zhang},
	journal={Arxiv},
	year={2024},
	volume={abs/2409.12122},
}

@inproceedings{Aitor2022Solving,
	author = {Lewkowycz, Aitor and Andreassen, Anders and Dohan, David and Dyer, Ethan and Michalewski, Henryk and Ramasesh, Vinay and Slone, Ambrose and Anil, Cem et al.},
	title = {Solving quantitative reasoning problems with language models},
	year = {2022},
	booktitle = {Advanced in Neural Information Processing Systems (NeurIPS)}
}

@inproceedings{Hendrycks2021MeasuringMP,
	title={Measuring Mathematical Problem Solving With the MATH Dataset},
	author={Dan Hendrycks and Collin Burns and Saurav Kadavath and Akul Arora and Steven Basart and Eric Tang and Dawn Xiaodong Song and Jacob Steinhardt},
	booktitle = {Advanced in Neural Information Processing Systems (NeurIPS)},
	year={2021}
}

@misc{deepscaler2025,
	title={DeepScaleR: Surpassing O1-Preview with a 1.5B Model by Scaling RL},
	author={Michael Luo and Sijun Tan and Justin Wong and Xiaoxiang Shi and William Y. Tang and Manan Roongta and Colin Cai and Jeffrey Luo and Li Erran Li and Raluca Ada Popa and Ion Stoica},
	year={2025},
	note={Notion Blog}
}

@inproceedings{Huang2024OlympicArenaBM,
	title={OlympicArena: Benchmarking Multi-discipline Cognitive Reasoning for Superintelligent AI},
	author={Zhen Huang and Zengzhi Wang and Shijie Xia and Xuefeng Li and Haoyang Zou and Ruijie Xu and Run-Ze Fan and Lyumanshan Ye and Ethan Chern and Yixin Ye et al.},
	booktitle = {Advanced in Neural Information Processing Systems (NeurIPS)},
	year={2024}
}

@article{cobbe2021gsm8k,
	title={Training Verifiers to Solve Math Word Problems},
	author={Cobbe, Karl and Kosaraju, Vineet and Bavarian, Mohammad and Chen, Mark and Jun, Heewoo and Kaiser, Lukasz and Plappert, Matthias and Tworek, Jerry and Hilton, Jacob and Nakano, Reiichiro and Hesse, Christopher and Schulman, John},
	journal={ArXiv},
	year={2021},
	volume={abs/2110.14168}
}

@article{Chen2025UnderstandingPA,
  title={Understanding Pre-training and Fine-tuning from Loss Landscape Perspectives},
  author={Huanran Chen and Yinpeng Dong and Zeming Wei and Yao Huang and Yichi Zhang and Hang Su and Jun Zhu},
  journal={ArXiv},
  year={2025},
  volume={abs/2505.17646},
}

@article{Aich2025FromST,
  title={From Sublinear to Linear: Fast Convergence in Deep Networks via Locally Polyak-Lojasiewicz Regions},
  author={Agnideep Aich and Ashit Baran Aich and Bruce Wade},
  journal={ArXiv},
  year={2025},
  volume={abs/2507.21429}
}

@inproceedings{peng2024navigating,
  title={Navigating the safety landscape: Measuring risks in finetuning large language models},
  author={Peng, Sheng Y and Chen, Pin-Yu and Hull, Matthew and Chau, Duen H},
  booktitle={Advances in Neural Information Processing Systems (NeurIPS)},
  year={2024}
}

@article{zhao2025ufo,
  title={UFO-RL: Uncertainty-Focused Optimization for Efficient Reinforcement Learning Data Selection},
  author={Zhao, Yang and Xiong, Kai and Ding, Xiao and Du, Li and Sun, Zhouhao and Guan, Jiannan and Zhang, Wenbin and Liu, Bin and Hu, Dong and Qin, Bing and others},
  journal={Arxiv},
  year={2025},
  volume={abs/2505.12457}
}

@article{kong2025rethinking,
  title={Rethinking the Sampling Criteria in Reinforcement Learning for LLM Reasoning: A Competence-Difficulty Alignment Perspective},
  author={Kong, Deyang and Guo, Qi and Xi, Xiangyu and Wang, Wei and Wang, Jingang and Cai, Xunliang and Zhang, Shikun and Ye, Wei},
  journal={Arxiv},
  volume={abs/2505.17652},
  year={2025}
}

@article{kaelbling1994associative,
  title={Associative reinforcement learning: A generate and test algorithm},
  author={Kaelbling, Leslie Pack},
  journal={Machine Learning},
  volume={15},
  number={3},
  pages={299--319},
  year={1994}
}

@inproceedings{garivier2011upper,
  title={On upper-confidence bound policies for switching bandit problems},
  author={Garivier, Aur{\'e}lien and Moulines, Eric},
  booktitle={International conference on algorithmic learning theory},
  pages={174--188},
  year={2011}
}

\appendix

\section{Detailed Proof of Theorem 1}
For a pre-training policy $\pi_{\theta_0}$, its performance gap with the optimal policy $\pi_{\theta^{\star}}$ on a reasoning task is:
\begin{equation}
    \mathbb{E}_{x\sim\mathcal{D}}[P^{\pi_{\theta^{\star}}}(x)-P^{\pi_{\theta_0}}(x)]=\epsilon,
\end{equation}
where $x$ is an instance sampled from the dataset $\mathcal{D}$, and $P^{\pi}(x)$ denotes the expected verification reward.

\begin{theorem}
Consider an optimization procedure where the policy is updated using a single sample at each step, let $\pi_{\theta^\star}$ denotes the optimal policy and $\pi_{\theta_N}$ denotes the policy after $N$ updates. To guarantee that the expected performance gap satisfies
\begin{equation}
    \mathbb{E}_{x \sim \mathcal{D}} \left[ P^{\pi_{\theta^\star}}(x) - P^{\pi_{\theta_N}}(x) \right] \leq \epsilon',
\end{equation}
it suffices that the number of steps $N$ satisfies
\begin{equation}
    N \geq \mathcal{O}( \ln \frac{\epsilon}{\epsilon'}),
\end{equation}
where $\epsilon$ denotes the initial performance gap.
\end{theorem}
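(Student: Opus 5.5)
The plan is to run the standard linear-convergence argument for stochastic gradient ascent under a Polyak--Łojasiewicz condition, working with the objective $J(\theta)=\mathbb{E}_{x\sim\mathcal{D}}[P^{\pi_\theta}(x)]$ and the gap $\Delta_t = J(\theta^\star)-J(\theta_t)$, so that $\Delta_0=\epsilon$. I would impose the same hypotheses used in the main-text argument:
\begin{itemize}
\item local $L$-smoothness of $J$ on a neighbourhood $\mathcal{N}$ of $\theta_0$;
\item single-sample updates $\theta_{t+1}=\theta_t+\alpha g_t$, where $g_t$ is an unbiased estimate of $\nabla J(\theta_t)$ with variance at most $\delta^2$;
\item a local PL inequality $\|\nabla J(\theta)\|^2\ge c\,\Delta(\theta)$ on $\mathcal{N}$, justified by the pretraining/SFT initialization.
\end{itemize}
These are assumptions rather than consequences. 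The theorem is really a statement about this idealized model.

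First, apply the smoothness lower bound with $\theta'=\theta_{t+1}$ and take conditional expectation. Unbiasedness gives $\mathbb{E}\langle\nabla J,\alpha g_t\rangle=\alpha\|\nabla J(\theta_t)\|^2$. For the quadratic term I use $\mathbb{E}\|g_t\|^2 = \|\nabla J(\theta_t)\|^2+\mathrm{Var}(g_t)\le \|\nabla J(\theta_t)\|^2+\delta^2$. Together these yield
\[
\mathbb{E}[J(\theta_{t+1})\mid\theta_t]\ge J(\theta_t)+\alpha\bigl(1-\tfrac{L\alpha}{2}\bigr)\|\nabla J(\theta_t)\|^2-\tfrac{L\alpha^2\delta^2}{2}.
\]

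Second, take $\alpha\le 1/L$, so that $1-L\alpha/2\ge 1/2$, and substitute the PL inequality. This gives
\[
\mathbb{E}[\Delta_{t+1}\mid\theta_t]\le(1-\eta)\Delta_t+\tfrac{L\alpha^2\delta^2}{2},\qquad \eta=\tfrac{\alpha c}{2}.
\]

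Third, shrink $\alpha$ further so that $L\alpha^2\delta^2/2\le \eta\epsilon'/2$, i.e.\ $\alpha\le c\epsilon'/(2L\delta^2)$. Taking total expectations and unrolling the recursion with the geometric sum $\sum_k(1-\eta)^k\le 1/\eta$ gives
\[
\mathbb{E}[\Delta_N]\le(1-\eta)^N\epsilon+\tfrac{\epsilon'}{2}.
\]
Since $\ln(1-\eta)\le-\eta$, the requirement $(1-\eta)^N\epsilon\le\epsilon'/2$ holds as soon as
\[
N\ge \tfrac{1}{\eta}\ln\tfrac{2\epsilon}{\epsilon'},
\]
which is $\mathcal{O}(\ln(\epsilon/\epsilon'))$ with the constant $1/\eta$ absorbed into the $\mathcal{O}$.

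The main obstacle is the justification rather than the algebra, and there are two issues.
\begin{itemize}
\item \emph{Staying in $\mathcal{N}$.} The iterates must remain in $\mathcal{N}$, where smoothness and PL are assumed. I would handle this by assuming $\mathcal{N}$ contains the whole trajectory, e.g.\ via a projection or KL-constraint argument, or by stating it as an explicit hypothesis.
\item \emph{Hidden dependence in $\eta$.} The noise-control step forces $\alpha\propto\epsilon'$, so $1/\eta$ scales like $L\delta^2/(c^2\epsilon')$. The honest bound is therefore $N=\mathcal{O}\bigl(\tfrac{L\delta^2}{c^2\epsilon'}\ln\tfrac{\epsilon}{\epsilon'}\bigr)$. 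Writing it as $\mathcal{O}(\ln(\epsilon/\epsilon'))$ is only legitimate if $\eta$ is treated as a fixed constant, which requires $\delta^2$ to be small relative to $\epsilon'$, or the noise to vanish near the optimum.
\end{itemize}
I would state this caveat explicitly when absorbing $1/\eta$ into the constant.
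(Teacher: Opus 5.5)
Your proposal is correct and follows the paper's proof essentially step for step. Smoothness plus an unbiased, bounded-variance single-sample gradient gives the one-step ascent bound. The local PL condition, together with $1-L\alpha/2\ge 1/2$ and the noise budget $L\alpha^2\delta^2/2\le \eta\epsilon'/2$, then yields $\mathbb{E}[\Delta_{t+1}]\le(1-\eta)\mathbb{E}[\Delta_t]+\eta\epsilon'/2$, which is the appendix's form (the main text's additive $\epsilon'/2$ would not unroll to $\epsilon'/2$), and unrolling and taking logarithms finishes. Your caveat that $\alpha\propto\epsilon'$ makes the hidden constant $1/\eta$ scale like $L\delta^2/(c^2\epsilon')$ is accurate, and the paper absorbs this dependence into $\mathcal{O}(\ln(\epsilon/\epsilon'))$ without comment.
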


\begin{proof}
Let \( J(\theta) = \mathbb{E}_{x \sim \mathcal{D}} \left[ P^{\pi_\theta}(x) \right] \) denote the expected performance of the policy parameterized by \( \theta \). Assume that \( J \) is locally \( L \)-smooth, i.e., there exists a constant $L>0$ and a neighborhood $\mathcal{N}$ containing the initial policy $\theta_0$, such that the following Lipschitz inequality holds for all \( \theta, \theta' \in \mathcal{N} \):
\begin{equation}
    J(\theta') \geq J(\theta) + \langle \nabla J(\theta), \theta' - \theta \rangle - \frac{L}{2} \| \theta' - \theta \|^2.
\end{equation}
By the policy gradient theorem, \( J \)'s gradient is given by:
\begin{equation}
    \nabla J(\theta) = \mathbb{E}_{x \sim \mathcal{D}} \left[ \nabla_{\theta} P^{\pi_\theta}(x) \right].
\end{equation}
Assume that the policy is updated at each step via gradient ascent with a fixed learning rate \( \alpha > 0 \):
\begin{equation}
    \theta_{t+1} = \theta_t + \alpha g_t,
\end{equation}
where \( g_t \) is an unbiased stochastic estimate of the true gradient, i.e., \( \mathbb{E}[g_t] = \nabla J(\theta_t) \), and the variance of \( g_t \) is bounded by $Var(g_t)\leq\delta^2$.
Due to the $L$-smoothing, 
\begin{equation}
	\begin{aligned}
		J(\theta_{(t+1)}&\geq J(\theta_t)+\langle \nabla J(\theta_t),\alpha g_t\rangle-\frac{L}{2}\| \alpha g_t\|^2 \\
		&=J(\theta_t)+\alpha\langle \nabla J(\theta_t),g_t\rangle-\frac{L\alpha^2}{2}\| g_t\|^2.
	\end{aligned}
\end{equation}
Taking expectations $\mathbb{E}\left[\cdot|\theta_t\right]$, we obtain:
\begin{equation}
\small
	\begin{aligned}
		&\mathbb{E}\left[J(\theta_{t+1}|\theta_t) \right]\geq J(\theta_t)+\alpha\|\nabla J(\theta_t)\|^2-\frac{L\alpha^2}{2}\mathbb{E}\left[\|g_t\|^2|\theta_t\right]\\
		&\geq J(\theta_t)+\alpha\|\nabla J(\theta_t)\|^2-\frac{L\alpha^2}{2}(\|\nabla J(\theta_t)\|^2+\delta^2)\\
		&=J(\theta_t)+\alpha(1-\frac{L\alpha}{2})||\nabla J(\theta_t)||^2-\frac{L\alpha^2\delta^2}{2},
	\end{aligned}
\end{equation}
Pretraining and SFT endow the model with an initial reasoning capability. Coupled with the explicit gradient signal provided by RLVR, the objective \( J \) satisfies a local Polyak–Łojasiewicz (PL) condition in the neighborhood induced by pretraining and SFT. Intuitively, the combination of pretraining/SFT (which shapes a favorable local manifold) and the supervised RLVR gradients renders the landscape locally well-conditioned for efficient optimization. That is, \( J \) satisfies
\begin{equation}
    \| \nabla J(\theta_t) \|^2 \geq c \left( J(\theta^\star) - J(\theta_t) \right) = c \Delta_t,
\end{equation}
for some constant \( c > 0 \), where \( \theta^\star \) is the optimal parameter and \( \Delta_t = J(\theta^\star) - J(\theta_t) \) denotes the performance gap at step \( t \). Then, we have:
\begin{equation}
\small
    \mathbb{E} \left[ J(\theta_{t+1}) - J(\theta_t) \mid \theta_t \right] \geq \alpha c (1 - \frac{L\alpha}{2}) \Delta_t - \frac{L\alpha^2 \delta^2}{2}.
\end{equation}
Let \( \eta = \alpha c / 2 \), and choose \( \alpha \) such that \( 1 - \frac{L\alpha}{2} \geq \frac{1}{2} \), we further require
\[
    \frac{L \alpha^2 \delta^2}{2} \leq \frac{\eta \epsilon'}{2},
\]
which leads to the recursive inequality:
\begin{equation}
	\begin{aligned}
		&\mathbb{E} \left[ \Delta_{t+1} \mid \theta_t \right] \\
		&=\Delta_t-\mathbb{E}\left[J(\theta_{t+1})-J(\theta_t)\mid \theta_t\right] \\
		&\leq (1 - \eta) \Delta_t + \frac{\epsilon'}{2}.
	\end{aligned}
\end{equation}
Let $\eta=\frac{\alpha c}{2}$ and $\frac{L\alpha^2\delta^2}{2}\leq\frac{\eta\epsilon'}{2}$, we can get
\begin{equation}
	\mathbb{E}\left[\Delta_{t+1}\right]\leq(1-\eta)\mathbb{E}\left[\Delta_t\right]+\frac{\eta\epsilon'}{2},
\end{equation}
If $\Delta_t\geq\epsilon'$, we obtain:
\begin{equation}
    \mathbb{E} \left[ \Delta_t \right] \leq (1 - \eta)^t \Delta_0 + \frac{\epsilon'}{2},
\end{equation}
where \( \Delta_0 = J(\theta^\star) - J(\theta_0) = \epsilon \) is the initial gap. To ensure \( \mathbb{E} \left[ \Delta_t \right] \leq \epsilon' \), it suffices that:
\begin{equation}
    (1 - \eta)^t \epsilon \leq \frac{\epsilon'}{2}.
\end{equation}
Taking logarithms on both sides yields:
\begin{equation}
    t \geq \frac{\ln(\epsilon'/2) - \ln(\epsilon)}{\ln(1 - \eta)} = \mathcal{O}( \ln \frac{\epsilon}{\epsilon'}),
\end{equation}
\end{proof}

\end{document}